\documentclass{article} 
\usepackage{iclr2025,times}

\usepackage{amsmath,amsfonts,bm}

\def\eqref#1{equation~\ref{#1}}

\def\1{\bm{1}}

\DeclareMathAlphabet{\mathsfit}{\encodingdefault}{\sfdefault}{m}{sl}
\SetMathAlphabet{\mathsfit}{bold}{\encodingdefault}{\sfdefault}{bx}{n}

\usepackage{hyperref}
\usepackage{url}

\usepackage{algorithm}
\usepackage{algorithmic}
\usepackage{amsfonts}
\usepackage{amsmath}
\usepackage{amsthm}
\usepackage{amssymb}
\usepackage{booktabs}
\usepackage{graphicx}
\usepackage{makecell}

\newcommand{\Sc}{\mathcal{S}}
\newcommand{\Ac}{\mathcal{A}}
\newcommand{\Tc}{\mathcal{T}}

\newcommand{\Dc}{\mathcal{D}}

\newcommand{\Nc}{\mathcal{N}}
\newcommand{\Rc}{\mathcal{R}}
\newcommand{\Uc}{\mathcal{U}}

\newcommand{\Bs}{\boldsymbol{s}}
\newcommand{\Ba}{\boldsymbol{a}}
\newcommand{\Btau}{\boldsymbol{\tau}}
\newcommand{\hBtau}{\widehat{\boldsymbol{\tau}}}
\newcommand{\hmu}{\widehat{\mu}}
\newcommand{\Bmu}{\boldsymbol{\mu}}
\newcommand{\0}{\mathbf{0}}
\newcommand{\bI}{\mathbf{I}}
\newcommand{\bg}{\mathbf{g}}
\newcommand{\Bepsilon}{\boldsymbol{\epsilon}}
\newtheorem{prop}{Proposition}
\DeclareMathOperator{\env}{ENV}

\title{Diffusion Modulation via Environment Mechanism Modeling for Planning}

\author{Hanping Zhang, Yuhong Guo\\
School of Computer Science, Carleton University, Ottawa, Canada\\
\texttt{\{jagzhang@cmail.,yuhong.guo\}@carleton.ca} \\
}

\iclrfinalcopy 
\begin{document}

\maketitle

\begin{abstract}
Diffusion models have shown promising capabilities in trajectory generation for planning in offline 
reinforcement learning (RL). 
However, conventional diffusion-based planning methods often fail to account for the fact 
that generating trajectories in RL requires unique consistency between transitions to ensure coherence in real environments. 
This oversight can result in considerable discrepancies between the generated trajectories and the 
underlying mechanisms of a real environment. 
To address this problem, we propose a novel diffusion-based planning method, 
termed as Diffusion Modulation via Environment Mechanism Modeling (DMEMM).
DMEMM modulates diffusion model training by incorporating key RL environment mechanisms, 
particularly transition dynamics and reward functions. 
Experimental results demonstrate that DMEMM achieves state-of-the-art performance 
for planning with offline reinforcement learning.
\end{abstract}

\section{Introduction}
Offline reinforcement learning (RL) has garnered significant attention for its potential to leverage pre-collected datasets to learn effective policies without requiring further interaction with the environment \citep{levine2020offlinereinforcementlearningtutorial}. One emerging approach within this domain is the use of diffusion models for trajectory generation \citep{janner2022diffuser}. Diffusion models \citep{sohl2015deep, ho2020denoising}, initially popularized for tasks such as image synthesis, have demonstrated promising capabilities in generating coherent and diverse trajectories for planning in offline RL settings \citep{janner2022diffuser,ni2023metadiffuser,li2023efficient,goyal2023robust}. 
Nevertheless, the essential differences between mechanisms in image synthesis and RL necessitate 
specific considerations for the effective application of diffusion models in RL.

In image synthesis \citep{ho2020denoising}, diffusion models primarily aim to produce visually coherent outputs consistent in style and structure, while RL tasks demand 
environment and task oriented
consistency between transitions in the generated trajectories \citep{janner2022diffuser} to ensure that the generated sequences are not only plausible but also effective for policy learning \citep{kumar2020conservative}. 
This consistency is essential for ensuring that the sequence of actions within the generated trajectories can successfully guide the RL agent from the current state to the target state. However, conventional diffusion-based planning methods often overlook this need for transition coherence \citep{janner2022diffuser}. 
By simply adopting traditional diffusion models like DDPM, which utilize a fixed isotropic variance for Gaussian distributions, 
such diffusion-based planning models may fail to adequately capture the transition dynamics necessary for effective RL, 
leading to inaccurate trajectories and suboptimal learned policies \citep{wu2019behavior}.

To address this problem, we introduce a novel diffusion-based planning method 
called Diffusion Modulation via Environment Mechanism Modeling (DMEMM). 
This method modulates the diffusion process by integrating RL-specific environment mechanisms, 
particularly transition dynamics and reward functions, directly into the diffusion model training process on offline data,
thereby enhancing the diffusion model to better capture the underlying transition and reward structures of the offline data. 
Specifically, 
we modify the diffusion loss by weighting it with the cumulative reward, 
which biases the diffusion model towards high-reward trajectories,
and introduce two auxiliary modulation losses based on empirical transition and reward models
to regularize the trajectory diffusion process,
ensuring that the generated trajectories 
are not only plausible but also reward-optimized.
Additionally, we also utilize the transition and reward models to guide the sampling process 
during planning trajectory generation from the learned diffusion model, 
further aligning the outputs with the desired transition dynamics and reward structures. 
We conducted experiments on multiple RL environments. 
Experimental results indicate that our proposed method achieves state-of-the-art performance 
compared to previous diffusion-based planning approaches. 

This work presents a significant step forward in the application of diffusion models for trajectory generation in offline RL. 
The main contributions can be summarized as follows:
\begin{itemize}
\item We identify a critical problem in conventional diffusion model training for offline RL planning, 
where the use of fixed isotropic variance and 
the disregard for rewards 
may lead to a mismatch between generated trajectories and those desirable for RL. 
To address this issue, we propose a novel method called Diffusion Modulation via Environment Mechanism Modeling (DMEMM).
\item We incorporate RL-specific environment mechanisms, including transition dynamics and reward functions, 
into diffusion model training through loss modulation,  
enhancing the quality and consistency of the generated trajectories in a principled manner
and providing a fundamental framework for adapting diffusion models to offline RL tasks. 
\item Our experimental results demonstrate that the proposed method achieves state-of-the-art results in planning with offline RL, validating the effectiveness of our approach.
\end{itemize}

\section{Related Works}

\subsection{Offline Reinforcement Learning}
Offline reinforcement learning (RL) has gained significant traction in recent years, with various approaches proposed to address the challenges of learning from static datasets without online environment interactions.
\citet{fujimoto2019off} introduced Batch Constrained Q-Learning (BCQ) that learns a perturbation model to constrain the policy to stay close to the data distribution, mitigating the distributional shift issue.  
\citet{wu2019behavior} conducted Behavior Regularized Offline Reinforcement Learning (BRAC) that incorporates behavior regularization into actor-critic methods to prevent the policy from deviating too far from the data distribution.
Conservative Q-Learning (CQL) by \citet{kumar2020conservative} uses a conservative Q-function to underestimate out-of-distribution actions, preventing the policy from exploring unseen state-action regions.
\citet{kostrikov2021offline} conducted Implicit Q-Learning (IQL) to directly optimize the policy to match the expected Q-values under the data distribution.
\citet{goyal2023robust} introduce Robust MDPs to formulate offline RL as a robust optimization problem over the uncertainty in the dynamics model.
Planning has emerged as a powerful tool for solving offline RL tasks. 
MOReL by \citet{kidambi2020morel} was the first to integrate planning into offline RL, using a learned dynamics model to simulate trajectories and enforce conservative constraints to avoid out-of-distribution actions. 
MOPO by \citet{yu2020mopomodelbasedofflinepolicy} enhances this with uncertainty-aware planning, penalizing simulated trajectories that deviate from the offline data. 
\citet{janner2021offlinereinforcementlearningbig} proposed Offline Model Predictive Control (MPC), which uses short-horizon planning by constructing future trajectories from offline data and selecting actions. 

\subsection{Diffusion Model in Reinforcement Learning}
Diffusion models have emerged as a powerful tool for RL tasks, particularly in the areas of planning and policy optimization. \citet{janner2022planning} first introduced the idea of using diffusion models for trajectory optimization on planning in offline RL, casting it as a probabilistic model that iteratively refines trajectories. Subsequent works by \citet{li2023efficient} introduce a Latent Diffuser that generates actions in the latent space by incorporating a Score-based Diffusion Model (SDM) \citep{song2021scorebased,nichol2021improveddenoisingdiffusionprobabilistic,ho2022classifierfreediffusionguidance} and utilizes energy-based sampling to improve the overall performance of diffusion-based planning. \citet{chen2024simple} propose a Hierarchical Diffuser, which achieves hierarchical planning by breaking down planning trajectories into segments and treating intermediate states as subgoals to ensure more precise planning. More recently, \citet{ni2023metadiffuser} proposed a task-oriented conditioned diffusion planner (MetaDiffuser) for offline meta-reinforcement learning. MetaDiffuser learns a context-conditioned diffusion model that can generate task-oriented trajectories for planning across diverse tasks, demonstrating the outstanding conditional generation ability of diffusion architectures. These works highlight the versatility of diffusion models in addressing RL challenges. 

\section{Preliminaries}

Reinforcement learning (RL)~\citep{sutton2018reinforcement} can be modeled as a Markov Decision Process (MDP) $M = (\Sc, \Ac, \Tc, \Rc)$ in a given environment, where $\Sc$ denotes the state space, $\Ac$ corresponds to the action space, $\Tc: \Sc \times \Ac \to \Sc$ defines the transition dynamics, and $\Rc: \Sc \times \Ac \to \mathbb{R}$ represents the reward function.
Offline RL aims to train an RL agent from an offline dataset $\Dc$, 
consisting of a collection of trajectories $\{\Btau_1,\Btau_2,\cdots,\Btau_i,\cdots \}$,
with each trajectory $\Btau_i = (s_0^i, a_0^i, r_0^i, s_1^i, a_1^i, r_1^i, \dots, s_T^i, a_T^i, r_T^i)$
sampled from the underlying MDP in the given environment.
In particular, the task of planning in offline RL aims to generate planning trajectories from an initial state $s_0$
by simulating action sequences $\Ba_{0:T}$ and predicting future states $\Bs_{0:T}$ based on those actions. 
The objective is to learn an optimal plan function such that the cumulative reward can be maximized 
when executing the plan under the underlying MDP of the given environment.

\subsection{Planning with Diffusion Model}
\label{sec:preliminary}

Diffusion probabilistic models, commonly known as ``diffusion models'' \citep{sohl2015deep, ho2020denoising}, are a class of generative models that utilize a unique Markov chain framework. When applied to planning in offline RL, the objective is to generate best planning trajectories $\{\Btau\}$ 
by learning a diffusion model on the offline RL dataset $\Dc$.

\paragraph{Trajectory Representation}
In the diffusion model applied to RL planning, it is necessary to predict both states and actions. 
Therefore, the trajectory representation in the model is 
in an image-like matrix format. 
In particular, trajectories are represented as two-dimensional arrays~\citep{janner2022diffuser}, 
where each column corresponds to 
a state-action pair $(s_t, a_t)$ of the trajectory:
$$
\Btau = \begin{bmatrix}
s_0 & s_1 & \cdots & s_T \\
a_0 & a_1 & \cdots & a_T 
\end{bmatrix}
$$

\paragraph{Trajectory Diffusion}
The diffusion model~\citep{ho2020denoising} 
comprises two primary processes: the forward process and the reverse process. 
The forward process (diffusion process) 
is a Markov chain characterized by $q(\Btau^k | \Btau^{k-1})$
that gradually adds Gaussian noise at each time step $k\in\{1,\cdots,K\}$, 
starting from an initial clean trajectory sample $\Btau^0\sim\Dc$. 
The conditional probability is particularly defined as a Gaussian probability density function, such as:
\begin{equation} 
q(\Btau^k | \Btau^{k-1}) := \Nc(\Btau^k; (1 - \beta_k)\Btau^{k-1}, \beta_k\mathbf{I}), 
\end{equation}
with $\{{\beta}_1,\cdots, \beta_K\}$ representing a predefined variance schedule. 
By introducing $\alpha_k := 1 - \beta_k$ and $\bar{\alpha}_k := \prod_{i=1}^{k} \alpha_i$, 
one can succinctly express the diffused sample at any time step $k$ as:
\begin{equation}
\label{eqa:tau}
\Btau^k = \sqrt{\bar{\alpha}_k}\Btau^0 + \sqrt{1 - \bar{\alpha}_k}\Bepsilon, 
\end{equation}
where 
$\Bepsilon \sim \Nc(\0, \mathbf{I})$.
The reverse diffusion process is an iterative denosing procedure,
and can be modeled as a parametric Markov chain characterized by $p_\theta(\Btau^{k-1} | \Btau^k)$, 
starting from a Gaussian noise prior $\Btau^K \sim \Nc(\0, \mathbf{I})$,
such that:
\begin{align}
	p_\theta(\Btau^{k-1} | \Btau^k) &= \Nc(\Btau^{k-1}; \mu_\theta(\Btau^k, k), \sigma^2_k \mathbf{I}), 
	\label{eq:reversediff}
	\\
	\mbox{with}\;\,	\mu_\theta(\Btau^k, k) &= \frac{1}{\sqrt{\alpha_k}} \left( \Btau^k - \frac{1 - \alpha_k}{\sqrt{1 - \bar{\alpha}_k}} \epsilon_\theta(\Btau^k, k) \right).
\label{eqa:mu}
\end{align}
\paragraph{Training}
In the literature, 
the diffusion model is trained by predicting the additive noise $\epsilon$ \citep{ho2020denoising}
using the noise network 
$\epsilon_\theta(\Btau^k, k)=\epsilon_\theta( \sqrt{\bar{\alpha}_k}\Btau^0 + \sqrt{1 - \bar{\alpha}_k}\Bepsilon,k)$. 
The training loss is expressed as the mean squared error 
between the additive noise $\epsilon$ and the predicted noise $\epsilon_{\theta}(\Btau^k, k)$:
\begin{equation}
L_{\text{diff}} = \mathbb{E}_{k \sim \Uc(1,K), \Bepsilon \sim \Nc(\0, \bI), \Btau^0 \sim \Dc} 
	\left\| \Bepsilon - \epsilon_\theta( \sqrt{\bar{\alpha}_k}\Btau^0 + \sqrt{1 - \bar{\alpha}_k}\Bepsilon,k)\right\|^2
	\label{eq:Ldiff}
\end{equation}
where $\Uc(1,K)$ denotes a uniform distribution over numbers in $[1, 2, \cdots, K]$.
With the trained noise network, the diffusion model can be used to generate RL trajectories for planning 
through the reverse diffusion process characterized by Eq.(\ref{eq:reversediff}).

\section{Method}

In this section, we present our proposed diffusion approach, Diffusion Modulation via Environment Mechanism Modeling (DMEMM),
for planning in offline RL. 
This method integrates the essential transition and reward mechanisms of reinforcement learning into
an innovative modulation-based diffusion learning framework,
while maintaining isotropic covariance matrices for the diffusion Gaussian distributions
to preserve the benefits of this conventional setup%
---simplifying model complexity, stabilizing training and enhancing performance. 
Additionally, the transition and reward mechanisms are further leveraged to guide the planning phase under the trained diffusion model, 
aiming to generate optimal planning trajectories that align with both the underlying MDP of the environment and the objectives of RL. 

\subsection{Modulation of Diffusion Training}
In an RL environment, the transition dynamics and reward function are two fundamental components of the underlying MDP.
Directly applying conventional diffusion models to offline RL can lead to a mismatch 
between the generated trajectories and those optimal for the underlying MDP in RL. 
This is due to the use of isotropic covariance and the disregard for rewards in traditional diffusion models. 
To tackle this problem, we propose to modulate the diffusion model training 
by deploying a reward-aware diffusion loss and 
enforcing auxiliary regularizations on the generated trajectories based on environment transition and reward mechanisms.  

Given the offline data $\Dc$ collected from the RL environment, 
we first learn a probabilistic transition model 
$\widehat{\Tc}(s_t, a_t)$ and a reward function $\widehat{\Rc}(s_t, a_t)$ from $\Dc$ 
as regression functions to predict the next state $s_{t+1}$ and the corresponding reward $r_t$ respectively. 
These models can serve
as estimations of the underlying MDP mechanisms. 
In order to regularize diffusion model training for generating desirable trajectories, 
using the learned transition model and reward function, 
we need to express the output trajectories of the reverse diffusion process
in terms of the diffusion model parameters, $\theta$. 
To this end, we present the following proposition.

\begin{prop}
Given the reverse process encoded by Eq.(\ref{eq:reversediff}) and Eq.(\ref{eqa:mu}) in the diffusion model,
the output trajectory $\hBtau^0$ denoised from an intermediate trajectory $\Btau^k$ at step $k$ 
has the following Gaussian distribution:
\begin{align}
	&\hBtau^0\sim \Nc(\hmu_\theta(\Btau^k, k), \widehat{\sigma}^2\mathbf{I}), 
\\ 
	\mbox{where}\quad&
\hmu_\theta(\Btau^k, k) =\frac{1}{\sqrt{\bar{\alpha}_k}}\Btau^k 
- \sum_{i=1}^k \frac{1 - \alpha_i}{\sqrt{(1 - \bar{\alpha}_i) \bar{\alpha}_i}} \epsilon_\theta (\Btau^i, i).
\end{align}
\label{prop1}
\end{prop}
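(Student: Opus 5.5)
The plan is to unroll the reverse chain of Eq.(\ref{eq:reversediff}) from step $k$ down to step $0$ and track the mean and the noise separately. By Eq.(\ref{eq:reversediff}) and Eq.(\ref{eqa:mu}), each reverse step can be written in reparameterized form as
\begin{equation*}
\Btau^{i-1} = \frac{1}{\sqrt{\alpha_i}}\Btau^i - \frac{1-\alpha_i}{\sqrt{\alpha_i}\sqrt{1-\bar{\alpha}_i}}\,\epsilon_\theta(\Btau^i,i) + \sigma_i \bg_i,
\end{equation*}
with independent $\bg_i\sim\Nc(\0,\bI)$. The final sample $\hBtau^0$ is the output of this recursion after $k$ steps, starting from the given $\Btau^k$.

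The first step is an induction on the number of steps taken, proving a closed form for $\Btau^{j}$ for every $j<k$. The claim to prove is
\begin{equation*}
\Btau^{j} = \sqrt{\tfrac{\bar{\alpha}_j}{\bar{\alpha}_k}}\,\Btau^k - \sum_{i=j+1}^{k} \sqrt{\tfrac{\bar{\alpha}_j}{\bar{\alpha}_{i}}}\,\frac{1-\alpha_i}{\sqrt{1-\bar{\alpha}_i}}\,\epsilon_\theta(\Btau^i,i) + \sum_{i=j+1}^{k}\sqrt{\tfrac{\bar{\alpha}_j}{\bar{\alpha}_{i-1}}}\,\sigma_i\bg_i ,
\end{equation*}
with the convention $\bar{\alpha}_0=1$.
\begin{itemize}
\item The base case $j=k-1$ is exactly the one-step formula above, since $\sqrt{\bar{\alpha}_{k-1}/\bar{\alpha}_k}=1/\sqrt{\alpha_k}$.
\item For the inductive step, substitute the formula for $\Btau^{j}$ into the one-step update for $\Btau^{j-1}$. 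Each coefficient is multiplied by $1/\sqrt{\alpha_j}$, which uses $\bar{\alpha}_{j-1}=\bar{\alpha}_j/\alpha_j$.
\end{itemize}
Setting $j=0$ gives the stated coefficients: $1/\sqrt{\bar{\alpha}_k}$ in front of $\Btau^k$, and $\frac{1-\alpha_i}{\sqrt{(1-\bar{\alpha}_i)\bar{\alpha}_i}}$ in front of $\epsilon_\theta(\Btau^i,i)$. These two deterministic parts together are exactly $\hmu_\theta(\Btau^k,k)$.

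The second step handles the noise. The remaining term is a linear combination of independent standard Gaussians, namely $\sum_{i=1}^k \sigma_i\bg_i/\sqrt{\bar{\alpha}_{i-1}}$. It is therefore Gaussian with isotropic covariance $\widehat{\sigma}^2\bI$, where $\widehat{\sigma}^2=\sum_{i=1}^k \sigma_i^2/\bar{\alpha}_{i-1}$. This identifies $\widehat{\sigma}^2$ explicitly.

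The main obstacle is that the intermediate $\Btau^i$ fed into $\epsilon_\theta(\Btau^i,i)$ for $i<k$ are themselves random. Because $\epsilon_\theta$ is nonlinear, $\hBtau^0$ is not exactly Gaussian unconditionally. I would resolve this the way the proposition implicitly intends: treat the realized denoising path $\{\Btau^i\}_{i\le k}$, and hence the values $\epsilon_\theta(\Btau^i,i)$, as given and conditioned upon, so they form a deterministic mean term. Under that reading, the Gaussian claim with mean $\hmu_\theta$ and variance $\widehat{\sigma}^2$ follows directly from the decomposition above. Equivalently, one can view $\hmu_\theta$ as the mean of the composed reparameterized map, with the randomness collected into an additive isotropic Gaussian.
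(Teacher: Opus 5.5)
Your algebra is the paper's own argument, done more cleanly. The paper also reparameterizes each reverse step, unrolls two steps by hand, merges the independent Gaussian noises and says to repeat; your induction on $j$ makes that repetition precise. Your coefficients for $\Btau^k$ and $\epsilon_\theta(\Btau^i,i)$ are correct, and $\widehat{\sigma}^2=\sum_{i=1}^k\sigma_i^2/\bar{\alpha}_{i-1}$ equals the paper's $\sigma_1^2+\sum_{i=2}^k\sigma_i^2/\bar{\alpha}_{i-1}$ since $\bar{\alpha}_0=1$.

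What fails is your resolution of the obstacle you rightly flagged. If you condition on the realized path $\Btau^k,\dots,\Btau^1$, the noises are no longer free. For each $i\ge 2$, $\sigma_i\bg_i=\Btau^{i-1}-\mu_\theta(\Btau^i,i)$ becomes determined, so only $\sigma_1\bg_1$ is random. The conditional law is then $\Nc(\mu_\theta(\Btau^1,1),\sigma_1^2\bI)$, not $\Nc(\hmu_\theta(\Btau^k,k),\widehat{\sigma}^2\bI)$. You cannot both freeze the $\epsilon_\theta$ values by conditioning and keep $\bg_2,\dots,\bg_k$ as independent standard Gaussians.

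Your ``equivalently'' reading has the same defect. The residual $\hBtau^0-\hmu_\theta(\Btau^k,k)=\sum_{i=1}^k\sigma_i\bg_i/\sqrt{\bar{\alpha}_{i-1}}$ is indeed $\Nc(\0,\widehat{\sigma}^2\bI)$ marginally. But it is correlated with $\hmu_\theta$, which depends on $\bg_2,\dots,\bg_k$ through the intermediate $\Btau^i$. So $\hmu_\theta$ is not the mean of $\hBtau^0$ given $\Btau^k$ (that mean involves $\mathbb{E}[\epsilon_\theta(\Btau^i,i)\mid\Btau^k]$), and $\hBtau^0$ is not Gaussian around it.

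The proposition is exactly true only if you decouple the inputs of $\epsilon_\theta$ from the chain iterates, and this is what the paper's proof implicitly does. It writes the iterates as $\hBtau^{i}$ but evaluates $\epsilon_\theta(\Btau^{i},i)$ at unhatted trajectories treated as fixed, justified only by a heuristic Central Limit Theorem remark. The main text later instantiates these as $\sqrt{\bar{\alpha}_i}\Btau^0+\sqrt{1-\bar{\alpha}_i}\Bepsilon$.

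Concretely, take the surrogate recursion $\hBtau^{k}=\Btau^k$, $\hBtau^{i-1}=\hBtau^{i}/\sqrt{\alpha_i}-\frac{1-\alpha_i}{\sqrt{\alpha_i(1-\bar{\alpha}_i)}}\epsilon_\theta(\Btau^i,i)+\sigma_i\bg_i$, with $\Btau^1,\dots,\Btau^{k-1}$ exogenous. Its output is affine in the independent $\bg_i$ with deterministic shifts, so your induction gives $\hBtau^0\sim\Nc(\hmu_\theta(\Btau^k,k),\widehat{\sigma}^2\bI)$ exactly. Replace the conditioning argument with this frozen-input definition, and state that for the genuine reverse chain it is only an approximation.
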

%
Conveniently, we can use the mean of the Gaussian distribution above directly 
as the most likely output trajectory, denoted as $\hBtau^0=\hmu_\theta(\Btau^k, k)$.
This allows us to express the denoised output trajectory explicitly 
in terms of the parametric noise network $\epsilon_\theta$, and thus the parameters $\theta$ of the diffusion model. 
Moreover, by deploying Eq.(\ref{eqa:tau}), 
we can get rid of the latent $\{\Btau^1,\cdots,\Btau^k\}$ and re-express  $\hBtau^0$
as the following function of a sampled clean trajectory $\Btau^0$ and some random noise $\Bepsilon$: 
\begin{align}
\hBtau^0_\theta(\Btau^0, k, \Bepsilon)
&= \Btau^0 + \sqrt{\frac{1 - \bar{\alpha}_k}{\bar{\alpha}_k}} \Bepsilon 
- \sum_{i=1}^k \frac{1 - \alpha_i}{\sqrt{(1 - \bar{\alpha}_i) \bar{\alpha}_i}} 
\epsilon_\theta \left( \sqrt{\bar{\alpha}_i} \Btau^0 + \sqrt{1 - \bar{\alpha}_i} \Bepsilon, i \right).
\end{align}
Next, we leverage this output trajectory function to modulate diffusion model training 
by developing modulation losses. 

\subsubsection{Transition-based Diffusion Modulation}
As previously discussed, the deployment of a fixed isotropic variance in conventional diffusion models
has the potential drawback of overlooking the underlying transition mechanisms of the RL environment. 
As a result, there can be 
potential mismatches between the transitions of generated trajectories and the underlying transition dynamics.
Consequently, the RL agent may diverge from the expected states 
when executing the planning actions generated by the diffusion model,
leading to poor planning performance. 
To address this problem, 
the first auxiliary modulation loss is designed to minimize the discrepancy between the transitions 
in the generated trajectories from the diffusion model 
and those predicted by the learned transition model $\widehat{\Tc}$, which encodes the underlying transition mechanism. 
Specifically, for each transition 
$(s_t, a_t, s_{t+1})$ 
in a generated trajectory $\hBtau^0_\theta(\Btau^0, k, \Bepsilon)$,
we minimize the mean squared error between $s_{t+1}$ and the predicted next state using the transition model $\widehat{\Tc}$.
This leads to the following transition-based diffusion modulation loss: 
\begin{align}
	L_{\text{tr}} &= \mathbb{E}_{k \sim \Uc(1,K), \Bepsilon \sim \Nc(\0, \bI), \Btau^0 \sim \Dc} 
	\Bigg[\sum_{(s_t, a_t, s_{t+1})\in \hBtau_\theta^0(\Btau^0, k, \Bepsilon)} 
	\big\|s_{t+1} - \widehat{\Tc}(s_t, a_t) \big\|^2 \Bigg]
\end{align}
Here, the expectation is taken over the uniform sampling of time step $k$ from $[1:K]$,
the random sampling of noise $\Bepsilon$ from a standard Gaussian distribution,
and the random sampling of input trajectories from the offline training data $\Dc$. 
Through function $\hBtau^0_\theta$, this loss $L_{\text{tr}}$ is a function of the diffusion model parameters $\theta$. 
By minimizing this transition-based modulation loss, 
we enforce that the generated trajectories from the diffusion model 
are consistent with the transition dynamics expressed in the offline dataset. 
This approach enhances the fidelity of the generated trajectories and improves the overall performance of the diffusion model 
in offline reinforcement learning tasks.

\subsubsection{Reward-based Diffusion Modulation}
The goal of planning is to generate trajectories that maximize cumulative rewards 
when executed under the underlying MDP of the given environment. 
Thus, focusing solely on the fit of the planning trajectories to the transition dynamics is insufficient. 
It is crucial to guide the diffusion model training to directly align with the planning objective.
Therefore, the second auxiliary modulation loss is designed to 
maximize the reward induced in the generated trajectories. 
As the trajectories generated from diffusion models do not have reward signals, 
we predict the reward scores of the state-action pairs $\{(s_t, a_t)\}$ in each 
trajectory generated through function $\hBtau^0_\theta(\cdot,\cdot,\cdot)$
using the learned reward function $\widehat{\Rc}(\cdot,\cdot)$.
Specifically, we formulate the reward-based diffusion modulation loss function 
as the following negative expected trajectory-wise cumulative reward 
from the generated trajectories: 
\begin{equation}
	L_{\text{rd}} = -\mathbb{E}_{k \sim \Uc(1,K), \Bepsilon \sim \Nc(\0, \bI), \Btau^0 \sim \Dc} 
	\Bigg[ \sum_{(s_t, a_t) \in \hBtau^0_\theta(\Btau^0, k,\Bepsilon)} \widehat{\Rc}(s_t, a_t) \Bigg]
\end{equation}
Through function $\hBtau^0_\theta$, this loss $L_{\text{rd}}$ again is a function of the diffusion model parameters $\theta$. 
By computing the expected loss over different time steps $k\in[1:K]$, different random noise $\Bepsilon$,
and all input trajectories from the offline dataset $\Dc$, 
we ensure that the modulation is consistently enforced across all instances of diffusion model training. 

By minimizing this reward-based loss, we ensure that the generated trajectories 
are not only plausible but also reward-optimized to align with the reward structure inherent in the offline data. 
This approach improves the quality of the trajectories generated from the diffusion model and enhances the overall 
policy learning process in offline reinforcement learning tasks.

\subsubsection{Reward-Aware Diffusion Loss}
In addition to the auxiliary modulation losses, we propose to further align diffusion model training with 
the goal of RL planning by devising a novel reward-aware diffusion loss to replace the original one. 
The original diffusion loss (shown in Eq.(\ref{eq:Ldiff})) minimizes the expected 
per-trajectory mean squared error
between the true additive noise and the predicted noise,
which gives equal weights to different training trajectories 
without differentiation. 
In contrast, we propose to weight each trajectory instance $\Btau^0$ from the offline dataset $\Dc$
using its normalized cumulative reward,
so that the diffusion training can focus more on the more informative trajectory instances with larger cumulative rewards. 
Specifically, we weight each training trajectory $\Btau^0$ using its normalized 
cumulative reward and formulate the following reward-aware diffusion loss: 
\begin{equation}
	L_{\text{wdiff}} = \mathbb{E}_{k \sim \Uc(1,K), \Bepsilon \sim \Nc(\0, \bI), \Btau^0 \sim \Dc} 
	\Bigg[ 
	\Bigg( \sum_{(s_t, a_t) \in \Btau^0} \frac{\Rc(s_t, a_t)}{ T_{\text{max}}\cdot r_\text{max}} \Bigg) 
	\left\| \Bepsilon - \epsilon_\theta( \sqrt{\bar{\alpha}_k}\Btau^0 + \sqrt{1 - \bar{\alpha}_k}\Bepsilon,k)\right\|^2
	\Bigg]	
\end{equation}
Here, $\sum_{(s_t, a_t) \in \Btau^0} \Rc(s_t, a_t)$ is the trajectory-wise cumulative reward 
on the original offline data instance $\Btau^0\in\Dc$; 
$T_\text{max}$ denotes the largest trajectory length 
and $r_{\text{max}}$ denotes the maximum possible per-step reward. 
By using $T_{\text{max}}\cdot r_\text{max}$ as the normalizer,
we scale the cumulative reward to a ratio within $(0,1]$ to weight the corresponding per-trajectory diffusion loss. 
This weighting mechanism biases the diffusion model toward high-reward trajectories,
ensuring that those trajectories yielding higher cumulative rewards are more accurately represented,
thus aligning diffusion training with the planning objectives in offline RL.  
This approach improves the model's performance on rare but valuable trajectories, which are crucial for effective 
reinforcement learning. 

\subsubsection{Full Modulation Framework}
The proposed full modulated diffusion model comprises all of the three loss components presented above: 
the reward-aware diffusion loss $L_{\text{wdiff}}$, 
the transition-based auxiliary modulation loss $L_{\text{tr}}$, and 
the reward-based auxiliary modulation loss $L_{\text{rd}}$. 
By integrating these loss terms together, we have the following total loss for modulated diffusion training:
\begin{equation}
\label{eqa:total_loss}
L_{\text{total}} = L_{\text{wdiff}} + \lambda_{\text{tr}} L_{\text{tr}} + \lambda_{\text{rd}} L_{\text{rd}},
\end{equation}
where $\lambda_{\text{tr}}$ and $\lambda_{\text{rd}}$ are trade-off parameters that balance the 
contributions of the transition-based and reward-based auxiliary losses, respectively. 
Standard diffusion training algorithm can be utilized to train the model $\theta$ by minimizing this total loss function. 
By employing this integrated loss function, 
we establish a comprehensive modulation framework that incorporates essential domain and task knowledge 
into diffusion model training, 
offering a general capacity of
enhancing the adaptation and broadening the applicability of diffusion models. 

\subsection{Planning with Dual Guidance}
Once trained, the diffusion model can be used to generate trajectories 
for planning during an RL agent's online interactions with the environment.
The generation procedure starts from an initial noise trajectory
$\Btau^K\sim\Nc(\0,\mathbf{I})$, 
and gradually denoises it by following the reverse diffusion process
$\Btau^{k-1}\sim\Nc(\Bmu^{k-1},\sigma_k^2\mathbf{I})$ 
for each time step $k\in \{K, K-1, \dots, 1\}$,
where $\Bmu^{k-1}$ 
is estimated through Eq. (\ref{eqa:mu}).
In each diffusion time step $k$, the first state $s_0$ of the trajectory $\Btau^k$ 
is fixed to the current state $s$ of the RL agent in the online environment 
to ensure the plan starts from it. 
The denoised trajectory $\Btau^0$ after $K$ diffusion time steps is 
treated as the plan for the RL agent, 
which is intended to maximize the RL agent's long-term performance
without extra interaction with the environment.

To further enhance the objective of planning, some previous work~\citep{janner2022diffuser} 
has utilized the learned reward function to guide the sampling process of planning. 
In this work, we propose to deploy dual guidance 
for each reverse diffusion step $k$ by exploiting both the reward function $\widehat{\Rc}$
and the transition model $\widehat{\Tc}$ learned from the offline dataset $\Dc$. 
Following previous works on conditional reverse diffusion~\citep{dhariwal2021diffusionmodelsbeatgans},
we incorporate the dual guidance by perturbing the mean of the Gaussian distribution 
$\Nc(\Bmu^{k-1},\sigma_k^2\mathbf{I})$ used for reverse diffusion sampling.
Specifically, we integrate the gradient $\bg$ of the linear combination of the reward function and transition function 
w.r.t the trajectory into $\Bmu^{k-1}$, such that 
$\Btau^{k-1}\sim\Nc(\Bmu^{k-1}+\alpha\sigma_k^2\bI\bg,\sigma_k^2\mathbf{I})$ and $\bg$ is computed as: 
\begin{equation}
\label{eqa:dualguidance}
	\bg = \sum_{t=0}^{T} \nabla_{(s_t, a_t)} \widehat{\Rc}(s_t, a_t)+\lambda 
	\sum_{t=0}^{T-1} \nabla_{(s_t, a_t)} \log \widehat{\Tc}(s_{t+1}|s_t,a_t)
\end{equation}
where $\alpha$ is a tradeoff parameter that controls the degree of guidance. 
By incorporating both the reward and transition guidance, 
we aim to enhance the planning process to generate high-quality trajectories 
that are both reward-optimized and transition-consistent,
improving the overall planning performance. 
The details of the proposed planning procedure is summarized in 
Algorithm \ref{alg:planning}.  

\begin{algorithm}[t]
\caption{Planning with Dual Guidance}
\label{alg:planning}
\begin{algorithmic}
\REQUIRE Noise network $\epsilon_\theta$, tradeoff parameter $\alpha$,
environment $\env$, covariances $\{\sigma_k^2\}$.
\STATE Initialize environment step $t=0$.
\WHILE{not finished}
\STATE Initialize noise trajectory $\Btau_t^K$: $\Btau_t^K\sim \Nc(0,\mathbf{I})$.
\FOR{diffusion step $k=K,\dots,1$}
    \STATE Compute the mean $\mu^{k-1}$ using Eq. (\ref{eqa:mu}).
    \STATE Compute the guidance $\bg$ using Eq. (\ref{eqa:dualguidance}).
    \STATE Sample next trajectory $\Btau_t^{k-1}$: $\Btau^{k-1}\sim\Nc(\Bmu^{k-1}+\alpha\sigma_k^2\bI\bg,\sigma_k^2\mathbf{I})$.
    \STATE Fix the current state $s_t$ to the trajectory: $\Btau_{t}^{k-1}(s_0)=s_t$.
\ENDFOR
\STATE Execute first action of plan $\Btau_t^0(a_0)$: $s_{t+1}=\env(s_t,\Btau_t^0(a_0))$
\STATE Increment environment step by 1: $t=t+1$
\ENDWHILE
\end{algorithmic}
\end{algorithm}
%

\section{Experiment}
In this section, we present the experimental setup and results for evaluating our proposed method, DMEMM, across various offline RL tasks. We conduct experiments on the D4RL locomotion suite and Maze2D environments to assess the performance of DMEMM compared to several state-of-the-art methods. The experiments are designed to demonstrate the effectiveness of our approach across different tasks, expert levels, and complex navigation scenarios.

\paragraph{Environments}
We conduct our experiments on D4RL \citep{d4rl} tasks to evaluate the performance of planning in offline RL settings. Initially, we focus on the D4RL locomotion suite to assess the general performance of our planning methods across different tasks and expert levels of demonstrations. The RL agents are tested on three different tasks: HalfCheetah, Hopper, and Walker2d, and three different levels of expert demonstrations: Med-Expert, Medium, and Med-Replay. We use the normalized scores provided in the D4RL \citep{d4rl} benchmarks to evaluate performance. Subsequently, we conduct experiments on Maze2D \citep{d4rl} environments to evaluate performance on maze navigation tasks.

\paragraph{Comparison Methods}
We benchmark our methods against several leading approaches in each task domain, including model-free BCQ \citep{fujimoto2019off}, BEAR \citep{kumar2019stabilizing}, CQL \citep{kumar2020conservative}, IQL \citep{kostrikov2022offline}, Decision Transformer (DT) \citep{chen2021decision}, model-based MoReL \citep{kidambi2020morel}, Trajectory Transformer (TT) \citep{janner2021trajectory}, and Reinforcement Learning via Supervised Learning (RvS) \citep{emmons2022rvs}. We also compare our methods with the standard diffusion planning method Diffuser \citep{janner2022diffuser} and a hierarchical improvement of Diffuser, PDFD \citep{pdfd2022}.

\paragraph{Implementation Details}
We adopt the main implementations of the diffusion model and reward model from \citep{janner2022diffuser}, and use an ensemble of Gaussian models as the backend for the transition model. We use a planning horizon $T$ of 100 for all locomotion tasks, 128 for block stacking, 128 for Maze2D / Multi2D U-Maze, 265 for Maze2D / Multi2D Medium, and 384 for Maze2D / Multi2D Large. We use $N = 100$ diffusion steps. Additionally, we employ a guide scale of $\alpha = 0.001$. For the tradeoff parameters, we use $\lambda_{\text{rd}} = 0.05$ for reward loss and $\lambda_{\text{td}} = 0.1$ for transition loss.

\subsection{Experimental Results on D4RL}
The experimental results summarized in Table \ref{tab:d4rl} highlight the performance of various comparison methods across different Gym tasks, with scores averaged over 5 seeds. Our proposed method, DMEMM, consistently outperforms other methods across all tasks. Notably, in the HalfCheetah environments, DMEMM achieves a 2.1-point improvement on the Med-Expert dataset, a 2.5-point increase on the Medium dataset, and an 8.0-point improvement on the Med-Replay dataset compared to the previous best results. Additionally, DMEMM shows a 5.9-point increase on the Med-Replay Hopper task, demonstrating that DMEMM effectively extracts valuable information, particularly from data that is not purely expert-level.

In most tasks, DMEMM outperforms HD-DA, another variant of a Diffuser based planning method, by more than 2.0 points on average. Compared to Diffuser, DMEMM shows superior performance on all tasks, indicating that our method improves the consistency and optimality of diffusion model training in offline RL planning.

Overall, DMEMM achieves outstanding performance. With an average score of 87.9, DMEMM leads significantly, representing a substantial improvement over the second-highest average score of 84.6 achieved by HD-DA. These results clearly demonstrate the robustness and superiority of DMEMM in enhancing performance across various Gym tasks.

\begin{table*}[t]
\centering
\caption{This table presents the scores on D4RL locomotion suites for various comparison methods. Results are averaged over 5 seeds.}
\resizebox{\textwidth}{!}{
\Large
\renewcommand{\arraystretch}{1.15}
\begin{tabular}{lcccccccccc}
\Xhline{1pt}
\textbf{Gym Tasks} & \textbf{BC} & \textbf{CQL} & \textbf{IQL} & \textbf{DT} & \textbf{TT} & \textbf{MOReL} & \textbf{Diffuser} & \textbf{HDMI} & \textbf{HD-DA} & \textbf{DMEMM (Ours)} \\ \hline
Med-Expert HalfCheetah & 55.2 & 91.6 & 86.7 & 86.8 & 95.0 & 53.3 & 88.9$\pm$0.3 & 92.1$\pm$1.4 & 92.5$\pm$0.3 & \bf{94.6$\pm$1.2}\\ 
Med-Expert Hopper & 52.5 & 105.4 & 91.5 & 107.6 & 110.0 & 108.7 & 103.3$\pm$1.3 & 113.5$\pm$0.9 & 115.3$\pm$1.1 & \bf{115.9$\pm$1.6}\\ 
Med-Expert Walker2d & 107.5 & 108.8 & 109.6 & 108.1 & 101.9 & 95.6 & 106.9$\pm$0.2 & 107.9$\pm$1.2 & 107.1$\pm$0.1 & \bf{111.6$\pm$1.1}\\ 
\hline
Medium HalfCheetah & 42.6 & 44.0 & 47.4 & 42.6 & 46.9 & 42.1 & 42.8$\pm$0.3 & 48.0$\pm$0.9 & 46.7$\pm$0.2 & \bf{49.2$\pm$0.8} \\ 
Medium Hopper & 52.9 & 58.5 & 66.3 & 67.6 & 61.1 & 95.4 & 74.3$\pm$1.4 & 76.4$\pm$2.6 & 99.3$\pm$0.3 & \bf{101.2$\pm$1.4} \\ 
Medium Walker2d & 75.3 & 72.5 & 78.3 & 74.0 & 79.0 & 77.8 & 79.6$\pm$0.6 & 79.9$\pm$1.8 & 84.0$\pm$0.6 & \bf{86.5$\pm$1.5} \\ 
\hline
Med-Replay HalfCheetah & 36.6 & 45.5 & 44.2 & 36.6 & 41.9 & 40.2 & 37.7$\pm$0.5 & 44.9$\pm$2.0 & 38.1$\pm$0.7 & \bf{46.1$\pm$1.3}\\ 
Med-Replay Hopper & 18.1 & 95.0 & 94.7 & 82.7 & 91.5 & 93.6 & 93.6$\pm$0.4 & 99.6$\pm$1.5 & 94.7$\pm$0.7 & \bf{100.6$\pm$0.9} \\ 
Med-Replay Walker2d & 26.0 & 77.2 & 73.9 & 66.6 & 82.6 & 49.8 & 70.6$\pm$1.6 & 80.7$\pm$2.1 & 84.1$\pm$2.2 & \bf{85.8$\pm$2.6}\\ \hline
\textbf{Average} & 51.9 & 77.6 & 77.0 & 74.7 & 78.9 & 72.9 & 77.5 & 82.6 & 84.6 & \bf{87.9}\\ \hline
\end{tabular}
}
\label{tab:d4rl}
\end{table*}

\subsection{Experimental Results on Maze2D}
We present our experimental results on the Maze2D navigation tasks in Table \ref{tab:maze2d}, where the results are averaged over 5 seeds. The table shows that in both the Maze2D and Multi2D environments, particularly at the U-Maze and Medium difficulty levels, our proposed DMEMM method significantly outperforms other comparison methods. Specifically, on Maze2D tasks, DMEMM achieves a 4.0 point improvement over the state-of-the-art HD-DA method on the U-Maze task, and a 2.6 point increase on the Medium-sized maze. Compared to Diffuser, DMEMM shows an almost 20-point improvement. These results indicate that our method performs exceptionally well in generating planning solutions for navigation tasks.

However, HD-DA shows better performance on the large maze tasks. This is likely due to the hierarchical structure of HD-DA, which offers an advantage in larger, more complex environments by breaking long-horizon planning into smaller sub-tasks—an area where our method is not specifically designed to excel. Nevertheless, DMEMM remains competitive in larger environments, while demonstrating superior performance in smaller and medium-sized tasks.

\begin{table*}[t]
\centering
\caption{This table presents the scores on Maze2D navigation tasks for various comparison methods. Results are averaged over 5 seeds.}
\resizebox{0.8\textwidth}{!}{
\Large
\renewcommand{\arraystretch}{1.15}
\begin{tabular}{lcccccc}
\toprule
\textbf{Environment} & \textbf{MPPI} & \textbf{IQL} & \textbf{Diffuser}  & \textbf{HDMI} & \textbf{HD-DA} & \textbf{DMEMM (Ours)}\\ \midrule
Maze2D U-Maze & 33.2 & 47.4 & 113.9$\pm$3.1 & 120.1$\pm$2.5 & 128.4$\pm$3.6 & \bf{132.4$\pm$3.0}\\ 
Maze2D Medium & 10.2 & 34.9 & 121.5$\pm$2.7 & 121.8$\pm$1.6 & 135.6$\pm$3.0 & \bf{138.2$\pm$2.2}\\ 
Maze2D Large & 5.1 & 58.6 & 123.0$\pm$6.4 & 128.6$\pm$2.9 & \bf{155.8$\pm$2.5} & 153.2$\pm$3.3 \\ 
\hline
Multi2D U-Maze & 41.2 & 24.8 & 128.9$\pm$1.8 & 131.3$\pm$1.8 & 144.1$\pm$1.2 & \bf{145.6$\pm$2.6}\\ 
Multi2D Medium & 15.4 & 12.1 & 127.2$\pm$3.4 & 131.6$\pm$1.9 & 140.2$\pm$1.6 & \bf{140.8$\pm$2.2}\\ 
Multi2D Large & 8.0 & 13.9 & 132.1$\pm$5.8 & 135.4$\pm$2.5 & \bf{165.5$\pm$0.6} & 159.6$\pm$3.8\\ 
\bottomrule
\end{tabular}
}
\label{tab:maze2d}
\end{table*}

\subsection{Ablation Study}
We conduct an ablation study on our DMEMM method to evaluate the effectiveness of different components of our approach. We compare our full model with four different ablation variants: (1) DMEMM-w/o-weighting, which omits the weighting function of the reward-aware diffusion loss; (2) DMEMM-w/o-$\lambda_{\text{tr}}$, which omits the transition-based diffusion modulation loss; (3) DMEMM-w/o-$\lambda_{\text{rd}}$, which omits the reward-based diffusion modulation loss; and (4) DMEMM-w/o-tr-guide, which omits the transition guidance in the dual-guided sampling. The ablation study is conducted on the Hopper and Walker2D environments across all three levels of expert demonstration. The results of the ablation study are presented in Table \ref{tab:ablation}, which shows the scores on D4RL locomotion suites for all four ablation variants, averaged over 5 seeds.

The ablation study results highlight the importance of each component in the DMEMM method. Across both the Hopper and Walker2d environments, and at all three difficulty levels, the full DMEMM model achieves the best performance. Notably, both DMEMM-w/o-$\lambda_\text{tr}$ and DMEMM-w/o-tr-guide exhibit significant performance drops, emphasizing  the crucial role of incorporating transition dynamics in our method. The introduction of transition dynamics to the diffusion model greatly enhances the consistency and fidelity of the generated trajectory plans. Furthermore, DMEMM-w/o-$\lambda_\text{tr}$ and DMEMM-w/o-weighting show comparable performance, with the DMEMM-w/o-$\lambda_\text{tr}$ variant experiencing a slightly greater performance decrease. This suggests that our designed reward model plays a crucial role in improving the optimality of the generated trajectory plans.

Overall, the ablation study demonstrates that each component of our DMEMM method contributes significantly to its performance. Removing any of these components results in a noticeable decrease in performance, highlighting the importance of the weighting function, transition-based and reward-based diffusion modulation loss, and transition guidance in achieving optimal results in offline reinforcement learning tasks.

\begin{table*}[t]
\centering
\caption{This table presents the scores on D4RL locomotion suites for all four ablation variants. Results are averaged over 5 seeds.}
\resizebox{\textwidth}{!}{
\Large
\renewcommand{\arraystretch}{1.15}
\begin{tabular}{lccccc}
\Xhline{1pt}
\textbf{Gym Tasks} & \textbf{DMEMM} & \textbf{DMEMM-w/o-weighting} & \textbf{DMEMM-w/o-$\lambda_{tr}$} & \textbf{DMEMM-w/o-$\lambda_{rd}$} & \textbf{DMEMM-w/o-tr-guide} \\ \hline
Med-Expert Hopper & \bf{115.9$\pm$1.6} & 115.2$\pm$0.4 & 114.4$\pm$0.8 & 115.0$\pm$0.4 & 114.8$\pm$0.2 \\ 
Med-Expert Walker2d & \bf{111.6$\pm$1.1} & 110.4$\pm$0.8 & 108.4$\pm$1.2 & 110.4$\pm$0.6 & 109.9$\pm$1.0 \\ 
\hline
Medium Hopper & \bf{101.2$\pm$1.4} & 100.4$\pm$1.2 & 98.6$\pm$1.8 & 100.1$\pm$1.1 & 99.8$\pm$1.6 \\ 
Medium Walker2d & \bf{86.5$\pm$1.5} & 85.6$\pm$1.2 & 82.8$\pm$1.4 & 84.4$\pm$0.9 & 83.0$\pm$1.8 \\ 
\hline
Med-Replay Hopper & \bf{100.6$\pm$0.9} & 98.8$\pm$1.2 & 97.0$\pm$0.9 & 98.2$\pm$0.6 & 96.2$\pm$1.2 \\ 
Med-Replay Walker2d & \bf{85.8$\pm$2.6} & 84.6$\pm$2.2 & 82.2$\pm$1.7 & 83.7$\pm$2.5 & 82.6$\pm$3.2 \\
\hline
\end{tabular}
}
\label{tab:ablation}
\end{table*}

\subsection{Hyperparameter Sensitivity Analysis}
In this section, we analyze the sensitivity of the tradeoff parameters $\lambda_{\text{tr}}$ (transition-based diffusion modulation loss) and $\lambda_{\text{rd}}$ (reward-based diffusion modulation loss) to understand their impact on performance in offline RL tasks. The analysis is conducted on two environments: Hopper-Medium-Expert and Walker2D-Medium-Expert.

Figures \ref{fig:sensitivity} illustrate the performance sensitivity to the tradeoff parameters. For $\lambda_{\text{tr}}$, the performance peaks at approximately $\lambda_{\text{tr}} = 0.1$ in both the Walker2D-Medium-Expert and Hopper-Medium-Expert environments. Beyond this optimal point, performance declines notably, regardless of whether $\lambda_{\text{tr}}$ is increased or decreased. Similarly, for $\lambda_{\text{rd}}$, the performance also peaks around $\lambda_{\text{rd}} = 0.05$ in both environments. However, unlike $\lambda_{\text{tr}}$, performance shows little change when $\lambda_{\text{rd}}$ is adjusted within a small range, indicating that $\lambda_{\text{rd}}$ is less sensitive than $\lambda_{\text{tr}}$. Overall, the hyperparameter sensitivity analysis shows that both $\lambda_{\text{rd}}$ and $\lambda_{\text{tr}}$ have similar effects on performance and are robust across different tasks. Additionally, it confirms that the selected hyperparameters for our experiments are optimal.

\begin{figure*}[t]
\centering
    {\includegraphics[width=0.245\textwidth]{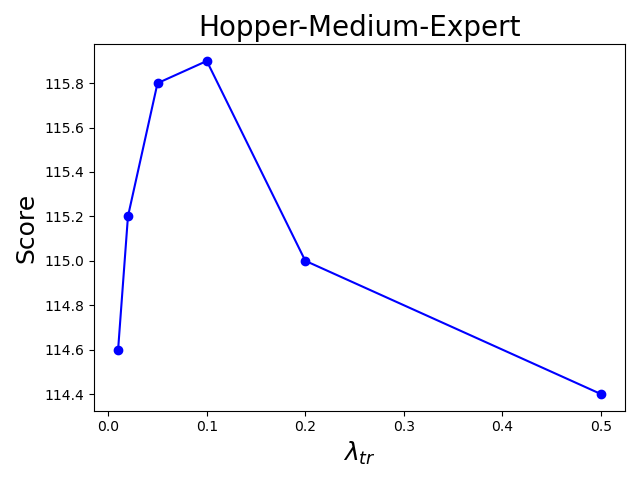}} 
	{\includegraphics[width=0.245\textwidth]{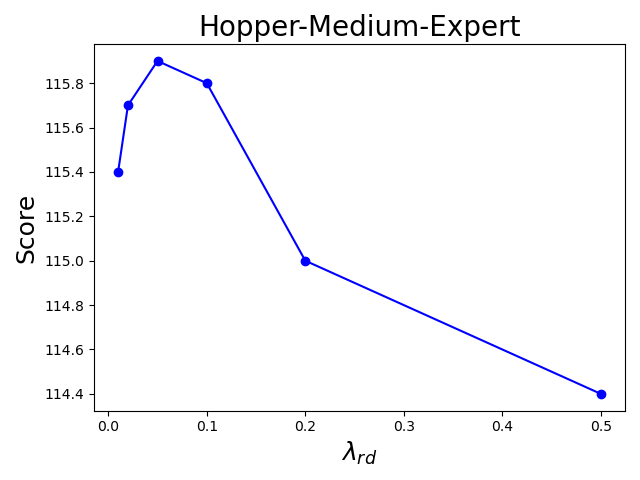}}
	{\includegraphics[width=0.245\textwidth]{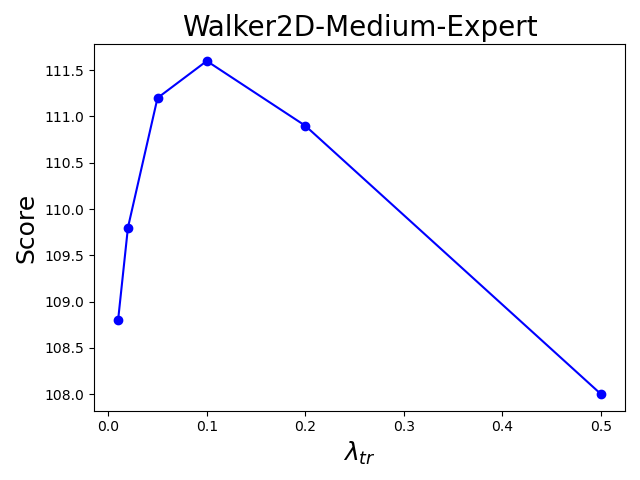}}
    {\includegraphics[width=0.245\textwidth]{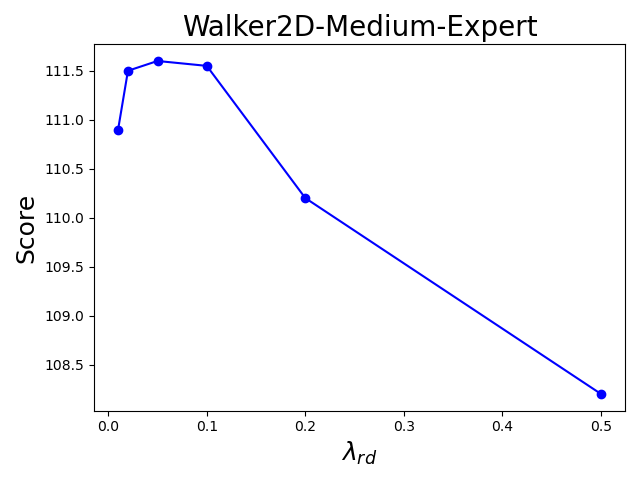}}
\vskip -.1in
\caption{
Hyperparameter sensitivity analysis of the tradeoff parameters for transition-based diffusion modulation loss ($\lambda_{tr}$) and reward-based diffusion modulation loss ($\lambda_{rd}$) on Hopper-Medium-Expert and Walker2D-Medium-Expert environments.
}
\label{fig:sensitivity}
\end{figure*}
%

\section{Conclusion}
In this work, we addressed a critical limitation of conventional diffusion-based planning methods in offline RL, which often overlook the consistency of transition dynamics in planned trajectories. To overcome this challenge, we proposed Diffusion Modulation via Environment Mechanism Modeling (DMEMM), a novel approach that integrates RL-specific environment mechanisms—particularly transition dynamics and reward functions—into the diffusion model training process. By modulating the diffusion loss with cumulative rewards and introducing auxiliary losses based on transition dynamics and reward functions, DMEMM enhances both the coherence and quality of the generated trajectories, ensuring they are plausible and optimized for policy learning. Our experimental results across multiple offline RL environments demonstrate the effectiveness of DMEMM, achieving state-of-the-art performance compared to previous diffusion-based planning methods. The proposed approach significantly improves the alignment of generated trajectories, addressing the discrepancies between offline data and real-world environments. This provides a promising framework for further exploration of diffusion models in RL and their potential practical applications.
\bibliography{iclr2025}

@article{janner2022planning,
  title={Planning with diffusion for flexible behavior synthesis},
  author={Janner, Michael and Du, Yilun and Tenenbaum, Joshua B and Levine, Sergey},
  journal={arXiv preprint arXiv:2205.09991},
  year={2022}
}

@article{li2023efficient,
  title={Efficient Planning with Latent Diffusion},
  author={Li, Wenhao},
  journal={arXiv preprint arXiv:2310.00311},
  year={2023}
}

@article{chen2024simple,
  title={Simple hierarchical planning with diffusion},
  author={Chen, Chang and Deng, Fei and Kawaguchi, Kenji and Gulcehre, Caglar and Ahn, Sungjin},
  journal={arXiv preprint arXiv:2401.02644},
  year={2024}
}

@inproceedings{ni2023metadiffuser,
  title={Metadiffuser: Diffusion model as conditional planner for offline meta-rl},
  author={Ni, Fei and Hao, Jianye and Mu, Yao and Yuan, Yifu and Zheng, Yan and Wang, Bin and Liang, Zhixuan},
  booktitle={International Conference on Machine Learning},
  pages={26087--26105},
  year={2023},
  organization={PMLR}
}

@inproceedings{fujimoto2019off,
  title={Off-policy deep reinforcement learning without exploration},
  author={Fujimoto, Scott and Meger, David and Precup, Doina},
  booktitle={International conference on machine learning},
  pages={2052--2062},
  year={2019},
  organization={PMLR}
}

@article{wu2019behavior,
  title={Behavior regularized offline reinforcement learning},
  author={Wu, Yifan and Tucker, George and Nachum, Ofir},
  journal={arXiv preprint arXiv:1911.11361},
  year={2019}
}

@article{kumar2020conservative,
  title={Conservative q-learning for offline reinforcement learning},
  author={Kumar, Aviral and Zhou, Aurick and Tucker, George and Levine, Sergey},
  journal={Advances in Neural Information Processing Systems},
  volume={33},
  pages={1179--1191},
  year={2020}
}

@article{kidambi2020morel,
  title={Morel: Model-based offline reinforcement learning},
  author={Kidambi, Rahul and Rajeswaran, Aravind and Netrapalli, Praneeth and Joachims, Thorsten},
  journal={Advances in neural information processing systems},
  volume={33},
  pages={21810--21823},
  year={2020}
}

@article{goyal2023robust,
  title={Robust markov decision processes: Beyond rectangularity},
  author={Goyal, Vineet and Grand-Clement, Julien},
  journal={Mathematics of Operations Research},
  volume={48},
  number={1},
  pages={203--226},
  year={2023},
  publisher={INFORMS}
}

@article{kostrikov2021offline,
  title={Offline reinforcement learning with implicit q-learning},
  author={Kostrikov, Ilya and Nair, Ashvin and Levine, Sergey},
  journal={arXiv preprint arXiv:2110.06169},
  year={2021}
}

@inproceedings{kumar2019stabilizing,
  title={Stabilizing off-policy q-learning via bootstrapping error reduction},
  author={Kumar, Aviral and Fu, Justin and Tucker, George and Levine, Sergey},
  booktitle={Advances in Neural Information Processing Systems},
  pages={11761--11771},
  year={2019}
}

@inproceedings{kostrikov2022offline,
  title={Offline reinforcement learning with implicit q-learning},
  author={Kostrikov, Ilya and Nachum, Ofir and Levine, Sergey and Tompson, Jonathan},
  booktitle={International Conference on Learning Representations},
  year={2022}
}

@article{chen2021decision,
  title={Decision transformer: Reinforcement learning via sequence modeling},
  author={Chen, Lili and Lu, Kevin and Rajeswaran, Aravind and Lee, Pieter Abbeel, Jason},
  journal={Advances in Neural Information Processing Systems},
  volume={34},
  pages={15084--15097},
  year={2021}
}

@inproceedings{janner2021trajectory,
  title={Trajectory transformer: Learning temporal dynamics for model-based planning},
  author={Janner, Michael and Li, Qiyang and Hsieh, Chang and Levine, Sergey and Finn, Chelsea},
  booktitle={Advances in Neural Information Processing Systems},
  pages={5792--5804},
  year={2021}
}

@inproceedings{emmons2022rvs,
  title={RvS: Reinforcement learning via supervised learning},
  author={Emmons, Scott and Lee, Honglak and Singh, Satinder},
  booktitle={Advances in Neural Information Processing Systems},
  year={2022}
}

@inproceedings{janner2022diffuser,
  title={Diffuser: Planning with Diffusion for Flexible Behavior Synthesis},
  author={Janner, Michael and Li, Qiyang and Cao, Xuesu and Finn, Chelsea},
  booktitle={Advances in Neural Information Processing Systems},
  year={2022}
}

@inproceedings{pdfd2022,
  title={PDFD: Planning with Diffusion via Flexible Dynamics},
  author={Author, First and Author, Second},
  booktitle={Proceedings of the Neural Information Processing Systems},
  year={2022}
}

@misc{d4rl,
  author = {Justin Fu and Aviral Kumar and Ofir Nachum and George Tucker and Sergey Levine},
  title = {D4RL: Datasets for Deep Data-Driven Reinforcement Learning},
  year = {2020},
  howpublished = {\url{https://github.com/rail-berkeley/d4rl}},
}

@InProceedings{sohl2015deep,
  title = {Deep Unsupervised Learning using Nonequilibrium Thermodynamics},
  author = {Sohl-Dickstein, Jascha and Weiss, Eric and Maheswaranathan, Niru and Ganguli, Surya},
  booktitle = {Proceedings of the 32nd International Conference on Machine Learning},
  pages = {2256--2265},
  year = {2015},
  editor = {Bach, Francis and Blei, David},
  volume = {37},
  series = {Proceedings of Machine Learning Research},
  address = {Lille, France},
  month = {07--09 Jul},
  publisher = {PMLR},
  url = {https://proceedings.mlr.press/v37/sohl-dickstein15.html}
}

@inproceedings{ho2020denoising,
  title={Denoising Diffusion Probabilistic Models},
  author={Ho, Jonathan and Jain, Ajay and Abbeel, Pieter},
  booktitle={Advances in Neural Information Processing Systems},
  pages={6840--6851},
  year={2020},
  organization={Curran Associates, Inc.},
  url={https://proceedings.neurips.cc/paper/2020/hash/4c5bcfec45690f3b1284a8e0c60242a0-Abstract.html}
}

@book{sutton2018reinforcement,
  title={Reinforcement learning: An introduction},
  author={Sutton, Richard S and Barto, Andrew G},
  year={2018},
  publisher={MIT press}
}

@misc{dhariwal2021diffusionmodelsbeatgans,
      title={Diffusion Models Beat GANs on Image Synthesis}, 
      author={Prafulla Dhariwal and Alex Nichol},
      year={2021},
      eprint={2105.05233},
      archivePrefix={arXiv},
      primaryClass={cs.LG},
      url={https://arxiv.org/abs/2105.05233}, 
}

@inproceedings{
  song2021scorebased,
  title={Score-Based Generative Modeling through Stochastic Differential Equations},
  author={Yang Song and Jascha Sohl-Dickstein and Diederik P Kingma and Abhishek Kumar and Stefano Ermon and Ben Poole},
  booktitle={International Conference on Learning Representations},
  year={2021},
  url={https://openreview.net/forum?id=PxTIG12RRHS}
}

@misc{nichol2021improveddenoisingdiffusionprobabilistic,
      title={Improved Denoising Diffusion Probabilistic Models}, 
      author={Alex Nichol and Prafulla Dhariwal},
      year={2021},
      eprint={2102.09672},
      archivePrefix={arXiv},
      primaryClass={cs.LG},
      url={https://arxiv.org/abs/2102.09672}, 
}

@misc{ho2022classifierfreediffusionguidance,
      title={Classifier-Free Diffusion Guidance}, 
      author={Jonathan Ho and Tim Salimans},
      year={2022},
      eprint={2207.12598},
      archivePrefix={arXiv},
      primaryClass={cs.LG},
      url={https://arxiv.org/abs/2207.12598}, 
}

@misc{yu2020mopomodelbasedofflinepolicy,
      title={MOPO: Model-based Offline Policy Optimization}, 
      author={Tianhe Yu and Garrett Thomas and Lantao Yu and Stefano Ermon and James Zou and Sergey Levine and Chelsea Finn and Tengyu Ma},
      year={2020},
      eprint={2005.13239},
      archivePrefix={arXiv},
      primaryClass={cs.LG},
      url={https://arxiv.org/abs/2005.13239}, 
}

@misc{janner2021offlinereinforcementlearningbig,
      title={Offline Reinforcement Learning as One Big Sequence Modeling Problem}, 
      author={Michael Janner and Qiyang Li and Sergey Levine},
      year={2021},
      eprint={2106.02039},
      archivePrefix={arXiv},
      primaryClass={cs.LG},
      url={https://arxiv.org/abs/2106.02039}, 
}

@misc{levine2020offlinereinforcementlearningtutorial,
      title={Offline Reinforcement Learning: Tutorial, Review, and Perspectives on Open Problems}, 
      author={Sergey Levine and Aviral Kumar and George Tucker and Justin Fu},
      year={2020},
      eprint={2005.01643},
      archivePrefix={arXiv},
      primaryClass={cs.LG},
      url={https://arxiv.org/abs/2005.01643}, 
}

@inproceedings{kingma2014auto,
  title={Auto-encoding variational Bayes},
  author={Kingma, Diederik P and Welling, Max},
  booktitle={International Conference on Learning Representations (ICLR)},
  year={2014},
  url={https://arxiv.org/abs/1312.6114}
}
\bibliographystyle{iclr2025}
\appendix
\section{Diffusion Training Algorithm}
\begin{algorithm}[t]
\caption{Diffusion Training}
\label{alg:training}
\begin{algorithmic}
\REQUIRE Offline data $\Dc=\{(s_0^i, a_0^i, r_0^i, s_1^i, a_1^i, r_1^i, \dots, s_T^i, a_T^i, r_T^i)\}$.
\STATE Learn transition model 
$\widehat{\Tc}(s_t, a_t)$ and reward function $\widehat{\Rc}(s_t, a_t)$ from offline data $\Dc$.
\STATE Initialize noise network $\epsilon_\theta(\tau^k,k)$.
\WHILE{not converged}
    \STATE Sample a trajectory from offline data $\Btau^0 \sim \Dc$.
    \STATE Sample a random diffusion step $k \sim \Uc(1,K)$.
    \STATE Sample a random noise $\Bepsilon \sim \Nc(0, \bI)$.
    \STATE Calculate the gradient $\nabla_\theta L_{\text{total}}$ of Eq. (\ref{eqa:total_loss}) and take gradient descent step.
\ENDWHILE
\end{algorithmic}
\end{algorithm}
The complete training process of the diffusion model is presented in Algorithm \ref{alg:training}. Prior to training the diffusion model, 
a probabilistic transition
model $\widehat{\Tc}(s_t, a_t)$ and a reward model $\widehat{\Rc}(s_t, a_t)$ are learned from the offline dataset $\Dc$. Afterward, the noise network is initialized and iteratively trained. During each iteration, an original trajectory $\Btau^0$ is sampled from the offline dataset $\Dc$, along with a randomly selected diffusion step $k$ and noise sample $\Bepsilon$. Gradient descent is then applied to minimize the total loss 
$L_\text{total}$. 

\section{Proof of Proposition \ref{prop1}}
In this section, we present the proof of Proposition \ref{prop1}.
\begin{proof}
To incorporate key RL mechanisms into the training of the diffusion model, we explore the denoising process and trace the denoised data through the reverse diffusion process. Let $\hBtau^0$
represent the denoised output trajectory.
It can be gradually denoised using the reverse process, following the chain rule: $\hBtau^0\sim p_\theta(\Btau^K)\prod_{k=1}^{K} p_\theta(\Btau^{k-1}|\Btau^k)$,
where the detailed reverse process is defined in Eq. (\ref{eq:reversediff}) and Eq. (\ref{eqa:mu}). Starting from an intermediate trajectory $\Btau^k$ at step $k$,
by combining these two equations, 
the trajectory at the next diffusion step, $k-1$, can be directly sampled from the distribution:
\begin{equation}
\hBtau^{k-1}\sim \Nc\left(\frac{1}{\sqrt{\alpha_k}} \left( \Btau^k - \frac{1 - \alpha_k}{\sqrt{1 - \bar{\alpha}_k}} \epsilon_\theta (\Btau^k, k) \right),\sigma_k^2\bI\right).
\end{equation}
By applying the reparameterization trick \citep{kingma2014auto}, we can derive a closed-form solution for the above distribution. Let $\Bepsilon_{k}$ represent the noise introduced in the reverse process $p_\theta(\Btau^{k-1}|\Btau_k)$, and the denoised trajectory can then be formulated as:
\begin{equation}
\begin{aligned}
\hBtau^{k-1}
&= \frac{1}{\sqrt{\alpha_k}} \left( \Btau^k - \frac{1 - \alpha_k}{\sqrt{1 - \bar{\alpha}_k}} \epsilon_\theta (\Btau^k, k) \right) + \sigma_k \Bepsilon_k \\
&= \frac{1}{\sqrt{\alpha_k}} \Btau^k - \frac{1 - \alpha_k}{\sqrt{(1 - \bar{\alpha}_k)\alpha_k}} \epsilon_\theta (\Btau^k, k) +\sigma_k \Bepsilon_k.
\end{aligned}
\end{equation}
In the following diffusion step $k-2$, the denoised data $\hBtau^{k-2}$ is sampled from a similar Gaussian distribution. By the Central Limit Theorem, $\hBtau^{k-1}$ serves as an unbiased estimate of $\Btau^{k-1}$. Therefore, the denoised data $\hBtau^{k-2}$ can be expressed as follows:
\begin{equation}
\begin{aligned}
\hBtau^{k-2}
&\sim \Nc\left(\frac{1}{\sqrt{\alpha_{k-1}}} \left( \Btau^{k-1} - \frac{1 - \alpha_{k-1}}{\sqrt{1 - \bar{\alpha}_{k-1}}} \epsilon_\theta (\Btau^{k-1}, k-1) \right),\sigma_{k-1}^2\bI\right) \\
&= \frac{1}{\sqrt{\alpha_{k-1}}} \left( \hBtau^{k-1} - \frac{1 - \alpha_{k-1}}{\sqrt{1 - \bar{\alpha}_{k-1}}} \epsilon_\theta (\Btau^{k-1}, k-1) \right) + \sigma_{k-1} \Bepsilon_{k-1} \\
&= \frac{1}{\sqrt{\alpha_{k-1}}} \left( \frac{1}{\sqrt{\alpha_k}} \Btau^k - \frac{1 - \alpha_k}{\sqrt{(1 - \bar{\alpha}_k)\alpha_k}} \epsilon_\theta (\Btau^k, k) - \frac{1 - \alpha_{k-1}}{\sqrt{1 - \bar{\alpha}_{k-1}}} \epsilon_\theta (\Btau^{k-1}, k-1) +\sigma_k \Bepsilon_k \right) \\
&\quad +  \sigma_{k-1} \Bepsilon_{k-1} \\
&= \frac{1}{\sqrt{\alpha_k\alpha_{k-1}}} \Btau^k 
- \frac{1 - \alpha_k}{\sqrt{(1 - \bar{\alpha}_k)\alpha_k\alpha_{k-1}}} \epsilon_\theta (\Btau^k, k) 
- \frac{1 - \alpha_{k-1}}{\sqrt{(1 - \bar{\alpha}_{k-1})\alpha_{k-1}}} \epsilon_\theta (\Btau^{k-1}, k-1) \\
&\quad + \frac{1}{\sqrt{\alpha_{k-1}}} \sigma_k \Bepsilon_k + \sigma_{k-1} \Bepsilon_{k-1}.
\end{aligned}
\end{equation}
The introduced noise $\Bepsilon_{k-1}$ in diffusion step $k-1$  can be combined with the noise $\Bepsilon_k$ at diffusion step $k$ into a joint noise term, $\bar{\Bepsilon}_{k-1}$, by merging two Gaussian distributions, $\Nc(0, \frac{\sigma_k^2}{\alpha_{k-1}}\bI)$ and $\Nc(0, \sigma_{k-1}^2\bI)$, into $\Nc(0, (\frac{\sigma_k^2}{\alpha_{k-1}}+\sigma_{k-1}^2)\bI)$. Consequently, we obtain the distribution for the denoised data $\hBtau^{k-2}$ with only directly computable terms, where
\begin{equation}
\begin{aligned}
\hBtau^{k-2}
&= \frac{1}{\sqrt{\alpha_k\alpha_{k-1}}} \Btau^k 
- \frac{1 - \alpha_k}{\sqrt{(1 - \bar{\alpha}_k)\alpha_k\alpha_{k-1}}} \epsilon_\theta (\Btau^k, k) 
- \frac{1 - \alpha_{k-1}}{\sqrt{(1 - \bar{\alpha}_{k-1})\alpha_{k-1}}} \epsilon_\theta (\Btau^{k-1}, k-1) \\
&\quad + \sqrt{\frac{\sigma_k^2}{\alpha_{k-1}} + \sigma_{k-1}^2} \bar{\Bepsilon}_{k-1} \\
&\sim \Nc\left(
\frac{1}{\sqrt{\alpha_k\alpha_{k-1}}} \Btau^k 
- \frac{1 - \alpha_k}{\sqrt{(1 - \bar{\alpha}_k)\alpha_k\alpha_{k-1}}} \epsilon_\theta (\Btau^k, k) 
- \frac{1 - \alpha_{k-1}}{\sqrt{(1 - \bar{\alpha}_{k-1})\alpha_{k-1}}} \epsilon_\theta (\Btau^{k-1}, k-1), \right. \\
&\quad \left. \left( \frac{\sigma_k^2}{\alpha_{k-1}} + \sigma_{k-1}^2 \right)\bI \right).
\end{aligned}
\end{equation}
By repeating the denoising process for $k$ iterations, we can ultimately obtain a closed-form representation of the denoised data $\hBtau^0$.
\begin{equation}
\begin{aligned}
\hBtau^0
&=\frac{1}{\sqrt{\prod_{i=1}^k\alpha_i}}\Btau^k-\sum_{i=1}^k \frac{1-\alpha_i}{\sqrt{(1-\bar{\alpha}_i)\prod_{j=1}^i \alpha_j}}\epsilon_\theta(\Btau^i,i) + \sqrt{\sigma_1^2+\sum_{i=2}^k\frac{\sigma_i^2}{\prod_{j=1}^{i-1}\alpha_j}}\bar{\epsilon}_1\\
&=\frac{1}{\sqrt{\bar{\alpha}_k}}\Btau^k-\sum_{i=1}^k \frac{1-\alpha_i}{\sqrt{(1-\bar{\alpha}_i)\bar{\alpha}_i}}\epsilon_\theta(\Btau^i,i)+\sqrt{\sigma_1^2+\sum_{i=2}^k\frac{\sigma_i^2}{\bar{\alpha}_{i-1}}}\bar{\epsilon}_1.
\end{aligned}
\end{equation}
Using the closed-form representation of the reparameterization trick, the final denoised data $\hBtau^0$ follows a Gaussian distribution, expressed as $\hBtau^0 \sim \Nc(\hmu_\theta(\Btau^k, k), \widehat{\sigma}^2\mathbf{I})$. The mean $\hmu_\theta(\Btau^k, k)$ captures the denoising trajectory and is formulated as:
\begin{equation}
\hmu_\theta(\Btau^k, k)=\frac{1}{\sqrt{\bar{\alpha}_k}}\Btau^k-\sum_{i=1}^k \frac{1-\alpha_i}{\sqrt{(1-\bar{\alpha}_i)\bar{\alpha}_i}}\epsilon_\theta(\Btau^i,i).
\end{equation}
Similarly, the covariance $\widehat{\sigma}^2$ accounts for the accumulation of noise over all diffusion steps and is written as:
\begin{equation}
\widehat{\sigma}^2=\sigma_1^2+\sum_{i=2}^k\frac{\sigma_i^2}{\bar{\alpha}_{i-1}}.
\end{equation}
\end{proof}

\end{document}